\newcommand{\new}[1]{{#1}}
\newcommand{\newtwo}[1]{{#1}}
\newtheorem{theorem}{Theorem}
\newtheorem{lemma}{Lemma}
\newtheorem{definition}{Definition}
\colorlet{myred}{red!80!black}
\colorlet{myblue}{blue!80!black}
\colorlet{mygreen}{green!60!black}
\colorlet{myorange}{orange!70!red!60!black}
\colorlet{mydarkred}{red!30!black}
\colorlet{mydarkblue}{blue!40!black}
\colorlet{mydarkgreen}{green!30!black}
\tikzset{
  >=latex, 
  node/.style={thick,circle,draw=myblue,minimum size=22,inner sep=0.5,outer sep=0.6},
  node in/.style={node,green!20!black,draw=mygreen!30!black,fill=mygreen!25},
  node hidden/.style={node,blue!20!black,draw=myblue!30!black,fill=myblue!20},
  node convol/.style={node,orange!20!black,draw=myorange!30!black,fill=myorange!20},
  node out/.style={node,red!20!black,draw=myred!30!black,fill=myred!20},
  connect/.style={thick,mydarkblue}, 
  connect arrow/.style={-{Latex[length=4,width=3.5]},thick,mydarkblue,shorten <=0.5,shorten >=1},
  node 1/.style={node in}, 
  node 2/.style={node hidden},
  node 3/.style={node out}
}
\def\nstyle{int(\lay<\Nnodlen?min(2,\lay):3)} 
\begin{document}

\title{\LARGE \bf Exploiting Symmetry in Dynamics for Model-Based Reinforcement Learning with Asymmetric Rewards}
\author{Yasin Sonmez, Neelay Junnarkar, and Murat Arcak
\thanks{This work was in supported in part by the National Science Foundation award CNS-2111688 and Air Force Office of Scientific Research grant FA9550-21-1-0288.}
\thanks{Yasin Sonmez, Neelay Junnarkar, and Murat Arcak are with the department
of Electrical Engineering and Computer Sciences at the University of California, Berkeley. (emails: \{yasin\_sonmez, neelay.junnarkar, arcak\} @berkeley.edu)}}

\pagestyle{empty}
\maketitle
\thispagestyle{empty}

\begin{abstract}
Recent work in reinforcement learning has leveraged symmetries in the model to improve sample efficiency in training a policy. A commonly used simplifying assumption is that the dynamics and reward both exhibit the same symmetry; however, in many real-world environments, the dynamical model exhibits symmetry independent of the reward model.
In this paper, we 
assume
only the dynamics 
exhibit symmetry, extending the scope of problems in reinforcement learning and learning in control theory to which symmetry techniques can be applied. We use Cartan's moving frame method to introduce a technique for learning dynamics that, by construction, exhibit specified symmetries. 
 Numerical experiments 
 demonstrate
 that the proposed method learns a more accurate dynamical model.
\end{abstract}



\section{Introduction}

A
major research
area 
in reinforcement learning (RL) is 
improving sample efficiency, the amount 
of interaction with the environment 
needed
to learn a good policy. While model-free methods such as \cite{lillicrap2019continuous, haarnoja_soft_2018,schulman2017proximal} have shown the ability to achieve high rewards in many complex environments, they 
require many environment interactions. Model-based reinforcement learning (MBRL) methods, on the other hand, have shown promise in learning policies quickly relative to the number of environment interactions. 
This 
 improvement in sample efficiency confirms that models 
contain a great deal of information useful for improving policies \cite{moerland_mbrlsurvey}.

One idea gaining recent attention in RL methods is to integrate symmetries into model-based information.
Using knowledge of symmetries,
one can extrapolate
observed behavior
to other scenarios.
Recent work has leveraged symmetries via equivariant networks \cite{vanderpol2021mdp, wang_mathrmso2-equivariant_2022}, which enforce certain transformation relationships on the inputs and outputs, and via data augmentation \cite{weissenbacher2022koopman}, where artificial data is created based on observed data and known transformations for training. However, a limitation of equivariant networks is that they must be designed for the particular group of symmetries to which they 
are
equivariant.
Unlike the above methods, which assume that the symmetries are known apriori, 
\cite{weissenbacher2022koopman} learns symmetries from data and 
augments the training data using 
these symmetries.

Methods that leverage symmetry 
have also been explored for
control
design and verification.
Reference \cite{grizzleOptimalControlSystems1984} uses symmetry to deduce the structure of the optimal controllers for nonlinear systems.
Reference \cite{SymmetryPreservingObservers} uses symmetry to improve the convergence of observers.
Reference \cite{maidens_dynamic_programming} explores the use of symmetry for reducing the dimension of variables involved in dynamic programming, accelerating the computation of control policies.
Reference \cite{maidens_reachability} uses symmetry reduction to accelerate reachability computations.
References \cite{sibaiMultiagentSafetyVerification2020} and \cite{sibaiUsingSymmetryTransformations2019} use symmetry in accelerating safety verification. 

A common assumption 
when using
symmetry in 
RL and optimal control is that both the dynamics and reward (or cost) function exhibit the same symmetries,
which
results in the optimal policy (or control law) 
likewise being symmetric
\cite{ravindran_model_2002, maidens_dynamic_programming}.
The symmetric policy structure is exploited 
for Markov Decision Processes (MDPs) and RL in \cite{vanderpol2021mdp, wang_mathrmso2-equivariant_2022, ravindran_model_2002, zinkevich_symmetry_2001}, and for dynamic programming in \cite{maidens_dynamic_programming}.
In many scenarios, however, symmetries in the dynamical model are not necessarily applicable to the reward model.
For example, multiple tasks might be executed in the same environment, where the same dynamical symmetries exist, while the reward model changes from task to task.
Exploiting symmetries only in the dynamical model widens the scope of RL problems to which symmetry can be applied.

Motivated by commonly encountered translation and rotation symmetries in dynamics, we consider a method of enforcing continuous symmetries through transformation into a reduced coordinate space. 
For example, the dynamics of the cart-pole system, commonly used for control systems pedagogy, are invariant to the position of the cart. 
This 
fact can be exploited
when learning the system dynamics by simply removing the position data from the training dataset.
It is, however, less clear how to account for symmetries that involve more complex transformations.

In this paper, we present a method to learn dynamical models that are, by construction, invariant under specified continuous symmetries. 
This enables encoding apriori known symmetry structure when learning a dynamical model.
To address a large class of symmetries, we use Cartan's moving frame method \cite{cartan1937théorie}, which simplifies models by adapting moving reference frames to the structure of the space.


This paper is organized as follows: in Section~\ref{sec:symmetries}, we provide background information on symmetries, and then use Cartan's method of moving frames to relate a symmetrical dynamical model to a function operating on an input space of reduced dimension. 
In Section~\ref{sec:learning}, we 
use
the result of Section~\ref{sec:symmetries} 
to learn a dynamical model that, by construction, satisfies 
specified 
symmetries.
In Section~\ref{sec:experiments} we apply symmetry reduction to learn dynamical models for two examples.
\section{Symmetries in Model Dynamics} \label{sec:symmetries}

Consider the deterministic discrete-time system
\begin{equation} \label{eq:discrete-dynamics}
    x_{k+1} = F(x_k, u_k)
\end{equation}
where $x_k \in \mathcal{X} = \mathbb{R}^{n}$
is the state, $u_k \in \mathcal{U} = \mathbb{R}^{n_u}$
is the control input, and $k \in \mathbb{N}$ denotes the time step.
\new{ \textit{Symmetry} 
refers to a
structure of the dynamical system such that transformations of the input $(x_k, u_k)$ to $F$ result in corresponding, known, transformations to the output $x_{k+1}$.
We formalize this notion with \textit{Lie transformation groups}, an introduction to which can be found in \cite{bulloGeometricControlMechanical2004} and \cite{leeIntroductionSmoothManifolds2012}.
\begin{definition}[Lie Transformation Group]
    Let $G$ be a Lie group acting on a smooth manifold $\Sigma$.
    Denote the group action by $w(g, \sigma) \mapsto g \cdot \sigma$.
    The group $G$ is a Lie transformation group if $w: G \times \Sigma \to \Sigma$ is smooth.
\end{definition}

Important properties of the group action are:
\begin{itemize}
    \item $e \cdot \sigma = \sigma$ for all $\sigma$ where $e$ is the identity of group $G$.
    \item $g_1 \cdot (g_2 \cdot \sigma) = (g_1 * g_2) \cdot \sigma$ for all $g_1, g_2 \in G, \sigma \in \Sigma$, where $g_1 * g_2$ denotes the group operation, which we will denote by $g_1 g_2$ henceforth.
\end{itemize}

We represent symmetries of \eqref{eq:discrete-dynamics} with an $r$-dimensional Lie group $G$ which is a Lie transformation group of both $\mathcal{X}$ and $\mathcal{U}$.
Assume $r \leq n$.
For notation, we consider the maps $\phi_g: \mathcal{X} \to \mathcal{X}$ defined by $x \mapsto g \cdot x$ and $\psi_g: \mathcal{U} \to \mathcal{U}$ defined by $u \mapsto g \cdot u$. 
Since the group actions are smooth, both of these maps are smooth for all $g \in G$.
Further, they 
have smooth
inverses $\phi^{-1}_g = \phi_{g^{-1}}$ and 
$\psi^{-1}_g = \psi_{g^{-1}}$.

}
We next 
define
dynamics invariant to symmetry: 
\begin{definition}[Invariant Dynamics]
    The system \eqref{eq:discrete-dynamics} is $G$-invariant if $F(\phi_g(x), \psi_g(u)) = \phi_g(F(x,u))$ for all $g \in G$, $x \in \mathcal{X}$, $u \in \mathcal{U}$.
\end{definition}

\new{

The premise of our method is to transform 
$(x, u)$ to its canonical form, corresponding to an element of $\{(\phi_g(x), \psi_g(u))\ |\ g \in G\}$.
When $F$ in \eqref{eq:discrete-dynamics} is $G$-invariant, 
we parameterize it by a function that operates on these canonical forms, which reside in a lower-dimensional space.
}
\new{
\subsection{Cartan's Moving Frame}
We use Cartan's moving frames \cite{cartan1937théorie} to construct a map from elements of the state space to their canonical forms.
What follows is a brief recap of Cartan's method based on \cite{SymmetryPreservingObservers, maidens_reachability, olverClassicalInvariantTheory1999}.
Note that, in general, this method constructs a map which exists only locally, due to the use of the implicit function theorem.
On many practical examples, such as those in this work and \cite{maidens_reachability}, the map exists over all $\mathcal{X}$ except a lower-dimensional submanifold.
Thus, in our presentation, we will focus on transformations within a single smooth chart.
Refer to \cite{olverClassicalInvariantTheory1999} for a thorough treatment of the subject.

Consider the Lie group $G$ and its action on $\mathcal{X}$ as before.
Assume $\mathcal{X}$ can be split into $\mathcal{X}^a$ and $\mathcal{X}^b$ with $r$ and $n-r$ dimensions respectively such that, for some $x_0 \in \mathcal{X}$, the map from $g \in G$ to the projection of $g \cdot x_0$ onto $\mathcal{X}^a$ is invertible.
Pick $c \in \mathcal{X}^a$ in the range of this map.
}
\newtwo{
For all $g \in G$, let $\phi_g^a: \mathcal{X} \to \mathcal{X}^a$ and $\phi_g^b: \mathcal{X} \to \mathcal{X}^b$ denote the compositions of $\phi_g$ and the projections onto $\mathcal{X}^a$ and $\mathcal{X}^b$ respectively.
}
\new{
Assume that, for all $x \in \mathcal{X}$, there exists a unique $g \in G$ such that $\phi_g^a(x) = c$.
The set $\mathcal{C} = \{x | \phi_e^a(x) = c\}$
 is an $n-r$ dimensional submanifold of $\mathcal{X}$ called a \textit{cross-section}.
We 
can 
then define 
the 
unique
map $\gamma: \mathcal{X} \to G$, such that $\phi_{\gamma(x)}(x) \in \mathcal{C}$.
This $\gamma$ is
called a \textit{moving frame}.
Using $\gamma$, we define 
the map 
$\rho: \mathcal{X} \to \mathcal{X}^b$ from elements in $\mathcal{X}$ to their canonical forms:
}
\begin{equation} \label{eq:rho}
    \rho(x) \triangleq \phi_{\gamma(x)}^b(x).
\end{equation}

This $\rho$ is invariant to the action of $G$ on the state, as shown in the following lemma adapted from \cite{SymmetryPreservingObservers}.
\begin{lemma} \label{lemma:rho}
    For all $g \in G$ and $x \in \mathcal{X}$, $\rho(\phi_g(x)) = \rho(x)$.
\end{lemma}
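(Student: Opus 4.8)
The plan is to first establish an \emph{equivariance} property of the moving frame $\gamma$ — namely that $\gamma(\phi_g(x)) = \gamma(x)\,g^{-1}$ for all $g \in G$ and $x \in \mathcal{X}$ — and then substitute this into the definition \eqref{eq:rho} of $\rho$ and collapse the composition using the action axioms. The whole argument is essentially bookkeeping with $e \cdot \sigma = \sigma$, $g_1 \cdot (g_2 \cdot \sigma) = (g_1 g_2)\cdot\sigma$, and the defining property of the cross-section, so no hard analysis is involved; the one place to be careful is the invocation of uniqueness.

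For the first step, recall that $\gamma(y)$ is characterized as the unique $h \in G$ with $\phi_h(y) \in \mathcal{C}$, equivalently $\phi^a_h(y) = c$. I would apply this with $y = \phi_g(x)$: the element $h = \gamma(\phi_g(x))$ is the unique one satisfying $\phi^a_h(\phi_g(x)) = c$. Writing $\phi_h \circ \phi_g = \phi_{hg}$ (which is just the action axiom $h\cdot(g\cdot x) = (hg)\cdot x$ read off componentwise), this condition becomes $\phi^a_{hg}(x) = c$. On the other hand $\phi^a_{\gamma(x)}(x) = c$ by definition of $\gamma(x)$, and since the standing assumption is that for each point of $\mathcal{X}$ there is a \emph{unique} group element carrying it into the cross-section, I conclude $hg = \gamma(x)$, hence $h = \gamma(x)\,g^{-1}$. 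This is the claimed equivariance.

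For the second step I would compute directly, using $\phi^b_k$ (the $\mathcal{X}^b$-component of $\phi_k$) and the composition law once more:
\[
  \rho(\phi_g(x)) = \phi^b_{\gamma(\phi_g(x))}\bigl(\phi_g(x)\bigr) = \phi^b_{\gamma(x) g^{-1}}\bigl(\phi_g(x)\bigr),
\]
and since $\phi_{\gamma(x) g^{-1}} \circ \phi_g = \phi_{(\gamma(x) g^{-1}) g} = \phi_{\gamma(x)}$ by associativity of the group operation and $g^{-1}g = e$, the right-hand side is the $\mathcal{X}^b$-component of $\phi_{\gamma(x)}(x)$, i.e.\ $\phi^b_{\gamma(x)}(x) = \rho(x)$. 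Therefore $\rho(\phi_g(x)) = \rho(x)$.

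I expect the only subtle point to be the use of the uniqueness hypothesis in the first step: one must observe that $hg$ and $\gamma(x)$ are \emph{both} group elements mapping $x$ into the cross-section and then appeal to uniqueness to identify them, rather than merely noting that each is a valid choice. Everything else — the action axioms, the definitions of $\phi^a_g$, $\phi^b_g$, $\gamma$, and $\rho$ — is routine substitution.
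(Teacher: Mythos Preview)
Your proof is correct and is essentially the same as the paper's: both identify $\gamma(\phi_g(x))\,g = \gamma(x)$ via the uniqueness assumption on the moving frame and then collapse the composition using the action axiom. The only cosmetic difference is that you isolate the equivariance $\gamma(\phi_g(x)) = \gamma(x)\,g^{-1}$ as a preliminary step, whereas the paper derives it inline after first expanding $\rho(\phi_g(x))$.
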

\begin{proof}
    Let $g\in G$.
    Then,
    $\rho(\phi_g(x)) = \phi_{\gamma(\phi_g(x))}^b(\phi_g(x)) = \phi_{\gamma(\phi_g(x)) g}^b(x)$.
    \new{From the definition of $\gamma$, we have that both $\phi_{\gamma(x)}^a(x)$ and $\phi_{\gamma(\phi_g(x))}^a(\phi_g(x))$ equal $c$.
    Note that $\phi_{\gamma(\phi_g(x))}^a(\phi_g(x)) = \phi_{\gamma(\phi_g(x)) g}^a(x)$.
    Since $\gamma(x)$ is the unique group element $h$ such that $\phi_h^a(x) = c$, we have that $\gamma(\phi_g(x)) g = \gamma(x)$.
    }
    Plugging this into the expression for $\rho(\phi_g(x))$ gives $\rho(\phi_g(x)) = \phi_{\gamma(x)}^b(x) = \rho(x)$.
\end{proof}

\new{
\example{ \label{example:car}
    In this example, we demonstrate solving for the moving frame for a system with both translation and rotation symmetries.
    Consider a car with state $x = (y, z, v_y, v_z, h_y, h_z)$ where $y$ and $z$ correspond to positions in a plane, $v_y$ and $v_z$ correspond to velocities along the $y$ and $z$ directions, and $h_y$ and $h_z$ correspond to the cosine and sine of the heading angle (and thus $h_y^2 + h_z^2 = 1$).
    Since we assume the dynamics of the car are both translation- and rotation-invariant, we consider the special Euclidean group $G = \mathrm{SE}(2)$, and parameterize it by a translation $(\tilde{y}, \tilde{z})$ and a rotation $\tilde{\theta}$.
    Then, with rotation matrix $R_{\tilde{\theta}} = \begin{bsmallmatrix} \cos \tilde{\theta} & -\sin \tilde{\theta} \\
            \sin \tilde{\theta} & \cos \tilde{\theta} \end{bsmallmatrix}$,
    we define the action of $G$ on the state space by
    \begin{equation}
        \phi_{(\tilde{y}, \tilde{z}, \tilde{\theta})}(x) = \begin{bmatrix}
            R_{\tilde{\theta}} & 0 & 0 \\
            0 & R_{\tilde{\theta}} & 0 \\
            0 & 0 & R_{\tilde{\theta}}
        \end{bmatrix}
        x + \begin{bmatrix}
            \begin{bmatrix}
                \tilde{y}  \\ \tilde{z} 
            \end{bmatrix} \\ 0 \\ 0
        \end{bmatrix}.
    \end{equation}

    We now derive $\gamma$ and $\rho$.
    Let $\mathcal{X}^a$ be the projection of $\mathcal{X}$ onto the $(y, z, h_y, h_z)$ components.
    Pick $c = (0, 0, 1, 0)$, representing a position at the origin with a heading angle of $0$.
    There exists a unique $(\tilde{y}, \tilde{z}, \tilde{\theta})$ to transform each $x \in \mathcal{X}$ into the cross-section defined by $\mathcal{C}$.
    We solve the following system of equations to find the moving frame:
    \begin{equation*}
        \begin{bmatrix}
            0 \\ 0 \\ 1 \\ 0
        \end{bmatrix} = \phi_{\gamma(x)}^a(x) =
        \begin{bmatrix}
            R_{\tilde{\theta}} & 0 \\ 0 & R_{\tilde{\theta}}
        \end{bmatrix} \begin{bmatrix}
            y \\ z \\ h_y \\ h_z
        \end{bmatrix} +
        \begin{bmatrix}
            \tilde{y} \\ \tilde{z}  \\ 0 \\ 0
        \end{bmatrix}.
    \end{equation*}
    This gives the solution
    \begin{equation}
        \gamma(x) = \begin{bmatrix} \tilde{y} \\ \tilde{z} \\ \tilde{\theta} \end{bmatrix} = \begin{bmatrix}
            -y h_y - z h_z \\ 
            y h_z - z h_y \\
            \mathrm{arctan2}(-h_z, h_y)
        \end{bmatrix}.
    \end{equation}
    We can then compute the group inverse of $\gamma(x)$ as:
    \begin{equation}
        \gamma(x)^{-1} = \begin{bmatrix}
            y \\ z \\ \mathrm{arctan2}(h_z, h_y)
        \end{bmatrix}.
    \end{equation}
    Substituting $\gamma(x)$ into the definition of $\rho$ from \eqref{eq:rho}, we get:
    \begin{equation}
        \rho(x) = \begin{bmatrix}
            h_y v_y + h_z v_z \\
            -h_z v_y + h_y v_z
        \end{bmatrix}.
    \end{equation}
}

}

\new{
\subsection{Dimension Reduction}
Using the map $\rho$ from elements in $\mathcal{X}$ to their canonical forms, we show that the dynamics in \eqref{eq:discrete-dynamics} are equivalent to a function operating on a lower-dimensional input space.
}


\begin{theorem} \label{thm:state-reduction}
    $F: \mathcal{X} \times \mathcal{U} \to \mathcal{X}$
    in system \eqref{eq:discrete-dynamics}
     is $G$-invariant if and only if there exists 
    $\bar{F}: \mathcal{X}^b \times \mathcal{U} \to \mathcal{X}$ such that 
    \begin{equation} \label{eq:thm-fbar-statement}
    \phi_{\gamma(x)}(F(x,u)) = \bar{F}(\rho(x), \psi_{\gamma(x)}(u))
    \end{equation}
    for all $x \in \mathcal{X}$ and $u \in \mathcal{U}$.
\end{theorem}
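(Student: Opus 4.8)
The plan is to prove the two implications separately. The forward direction (the dynamics being $G$-invariant $\Rightarrow$ existence of $\bar F$) will be handled by choosing a canonical representative inside the cross-section $\mathcal{C}$; the converse will follow from Lemma~\ref{lemma:rho} together with the group-action axioms.

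For the ``$\Rightarrow$'' direction, I would first record the structural fact that, in the split coordinates $x = (x^a, x^b) \in \mathcal{X}^a \times \mathcal{X}^b$, we have $\phi_e = \mathrm{id}$, hence $\mathcal{C} = \{x : \phi_e^a(x) = c\} = \{c\} \times \mathcal{X}^b$, so $\phi_e^b$ restricts to a bijection $\mathcal{C} \to \mathcal{X}^b$ whose inverse I denote $\iota : \mathcal{X}^b \to \mathcal{C}$, with $\iota(p) = (c,p)$. By the definitions of $\gamma$ and $\rho$, the point $\phi_{\gamma(x)}(x)$ lies in $\mathcal{C}$, has $\mathcal{X}^a$-component $c$, and has $\mathcal{X}^b$-component $\rho(x)$; therefore $\phi_{\gamma(x)}(x) = \iota(\rho(x))$. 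I then define $\bar F(p, v) \triangleq F(\iota(p), v)$ and verify \eqref{eq:thm-fbar-statement}: substituting $p = \rho(x)$ and $v = \psi_{\gamma(x)}(u)$ gives $\bar F(\rho(x), \psi_{\gamma(x)}(u)) = F(\iota(\rho(x)), \psi_{\gamma(x)}(u)) = F(\phi_{\gamma(x)}(x), \psi_{\gamma(x)}(u))$, and by $G$-invariance of $F$ with group element $\gamma(x)$ this equals $\phi_{\gamma(x)}(F(x,u))$, as required.

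For the ``$\Leftarrow$'' direction, assume such a $\bar F$ exists, fix $g \in G$, and evaluate \eqref{eq:thm-fbar-statement} at $(\phi_g(x), \psi_g(u))$: the left side is $\phi_{\gamma(\phi_g(x))}(F(\phi_g(x), \psi_g(u)))$ and the right side is $\bar F(\rho(\phi_g(x)), \psi_{\gamma(\phi_g(x))}(\psi_g(u)))$. Using $\rho(\phi_g(x)) = \rho(x)$ from Lemma~\ref{lemma:rho}, the identity $\gamma(\phi_g(x)) g = \gamma(x)$ extracted in its proof, and $\psi_{h_1}(\psi_{h_2}(u)) = \psi_{h_1 h_2}(u)$, the right side collapses to $\bar F(\rho(x), \psi_{\gamma(x)}(u))$, which by \eqref{eq:thm-fbar-statement} at $(x,u)$ equals $\phi_{\gamma(x)}(F(x,u))$. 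On the left side $\gamma(\phi_g(x)) = \gamma(x) g^{-1}$; applying $\phi_{g\gamma(x)^{-1}}$ to both sides of the resulting equality and collapsing group products via $\phi_{h_1}\circ\phi_{h_2} = \phi_{h_1 h_2}$ and $\phi_e = \mathrm{id}$ yields $F(\phi_g(x),\psi_g(u)) = \phi_g(F(x,u))$, i.e. $G$-invariance.

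The main obstacle is the forward direction: identifying the correct $\bar F$ and, in particular, recognizing the identity $\phi_{\gamma(x)}(x) = \iota(\rho(x))$ that lets $G$-invariance finish the argument. The converse is essentially bookkeeping with the group-action axioms and the relation $\gamma(\phi_g(x)) = \gamma(x)g^{-1}$ already established in the proof of Lemma~\ref{lemma:rho}.
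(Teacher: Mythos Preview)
Your proposal is correct and follows essentially the same approach as the paper: you define $\bar F$ via the inverse of $\rho|_{\mathcal{C}}$ (your $\iota$ is exactly the paper's $\rho|_{\mathcal{C}}^{-1}$) and then use $G$-invariance together with $\phi_{\gamma(x)}(x)\in\mathcal{C}$ for the forward direction, and the identity $\gamma(\phi_g(x))g=\gamma(x)$ from Lemma~\ref{lemma:rho} for the converse. The only cosmetic difference is that in the forward direction you verify $\phi_{\gamma(x)}(x)=\iota(\rho(x))$ directly from the split-coordinate structure of $\mathcal{C}$, whereas the paper routes the same step through Lemma~\ref{lemma:rho} applied with $g=\gamma(x)$.
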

\begin{proof}
    Assume that the dynamics $F$ are $G$-invariant.
    Note that $\rho$ restricted to $\mathcal{C}$ is invertible.
    Define $\bar{F}$ by $\bar{F}(\bar{x}, u) = F(\rho|_{\mathcal{C}}^{-1}(\bar{x}), u)$.
    Letting $x = \rho|_{\mathcal{C}}^{-1}(\bar{x})$, this can equivalently be written as $\bar{F}(\rho(x), u) = F(x, u)$ when $x \in \mathcal{C}$.
    Since $F$ is $G$-invariant, $\phi_{\gamma(x)}(F(x,u)) = F(\phi_{\gamma(x)}(x), \psi_{\gamma(x)}(u))$.
    Note that $\phi_{\gamma(x)}(x) \in \mathcal{C}$.
    This can be rewritten in terms of $\bar{F}$ as $\bar{F}(\rho(\phi_{\gamma(x)}(x)), \psi_{\gamma(x)}(u))$.
    By Lemma~\ref{lemma:rho}, this equals $\bar{F}(\rho(x), \psi_{\gamma(x)}(u))$, as desired.

To prove the reverse direction,
    assume 
    $\bar{F}: \mathcal{X}^b \times \mathcal{U} \to \mathcal{X}$ exists such that
    \eqref{eq:thm-fbar-statement} holds
    for all $x \in \mathcal{X}$, $u \in \mathcal{U}$.
    Let $g \in G$. 
    Then,
    \begin{align*}
        F(& \phi_g(x), \psi_g(u))  \\
        & = \phi_{\gamma(\phi_g(x))}^{-1}(\bar{F}(\rho(\phi_g(x)), \psi_{\gamma(\phi_g(x))}(\psi_g(u)))) \\
        & = \phi_{\gamma(\phi_g(x))}^{-1}(\bar{F}(\rho(x), \psi_{\gamma(\phi_g(x))g}(u))). \\
    \end{align*}
    From the proof of Lemma~\ref{lemma:rho}, we know $\gamma(\phi_g(x))g = \gamma(x)$.
    Therefore, the above can be simplified to 
    \begin{align*}
        F(\phi_g(x), \psi_g(u)) & = \phi_{\gamma(x)g^{-1}}^{-1}(\bar{F}(\rho(x), \psi_{\gamma(x)}(u))) \\
        & = \phi_{g} (\phi_{\gamma(x)^{-1}} ( \phi_{\gamma(x)} ( F(x,u) ))) \\
        & 
        = \phi_{g} ( F(x,u) ).
    \end{align*}
    Thus, $F$ is $G$-invariant.
\end{proof}

\section{Learning Model Dynamics} \label{sec:learning}

A critical part of the theory in Section~\ref{sec:symmetries} is that the \new{ only knowledge of system dynamics required is that of the state space, the control input space, and the system symmetries.}
We leverage this to
learn a dynamical model that is invariant to a given transformation group 
using Theorem~\ref{thm:state-reduction}.
Unlike a standard formulation, 
which would learn the 
model $F: \mathcal{X} \times \mathcal{U} \to \mathcal{X}$, we learn a function $\bar{F}: \mathcal{X}^b \times \mathcal{U} \to \mathcal{X}$ and
construct 
$F$ from:
\begin{equation} \label{eq:recover-F}
    F(x, u) \triangleq \phi_{\gamma(x)}^{-1}(\bar{F}(\rho(x), \psi_{\gamma(x)}(u))).
\end{equation}
\new{Note that, in this section, we use 
$(x, u, x^\prime)$ to denote a state, an action applied to the state, and the next state.}

The relationship between $F$ and $\bar{F}$ is depicted in Figure~\ref{fig:F-Fbar-relationship}.
From Theorem~\ref{thm:state-reduction},  $F$ is, by construction, $G$-invariant.
A side-benefit is that training is done with a lower dimensional input space ($\mathcal{X}^b \times \mathcal{U}$ instead of $\mathcal{X} \times \mathcal{U}$).
To learn this function $\bar{F}$, given a dataset $\mathcal{D}$ consisting of tuples $(x, u, x^\prime)$, we train $\bar{F}$ to map $(\rho(x), \psi_{\gamma(x)}(u))$ to $\phi_{\gamma(x)}(x^\prime)$.

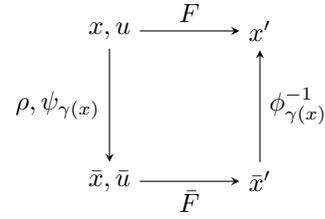
\begin{figure}[t]
    \centering
    \begin{tikzpicture}[>=stealth, node distance=2cm, auto]
        \node (x_u) {\(x, u\)};
        \node (xprime) [right of=x_u] {\(x^\prime\)};
        \node (xbar_ubar) [below of=x_u] {\(\bar{x}, \bar{u}\)};
        \node (xbarprime) [below of=xprime] {\(\bar{x}^\prime\)};
    
        \draw[->] (x_u) to node {\(F\)} (xprime);
        \draw[->] (x_u) to node [swap] {\(\rho, \psi_{\gamma(x)}\)} (xbar_ubar);
        \draw[->] (xbarprime) to node [swap] {\(\phi_{\gamma(x)}^{-1}\)} (xprime);
        \draw[->] (xbar_ubar) to node [swap] {\(\bar{F}\)} (xbarprime);
    \end{tikzpicture}
    \caption{Relationship between $F$ and $\bar{F}$.}
    \label{fig:F-Fbar-relationship}
\end{figure}

Note that, in practice, it is typical to learn a function which maps $(x, u)$ to $x^\prime - x$ instead of $x^\prime$ directly.
To do this, we learn $\Delta \bar{F}$ to map $(\rho(x), \psi_{\gamma(x)}(u))$ to $\phi_{\gamma(x)}(x^\prime) - \phi_{\gamma(x)}(x)$.
Then, we construct $F$ by 
\begin{equation*}
    F(x,u) = \phi_{\gamma(x)}^{-1}(\Delta\bar{F}(\rho(x), \psi_{\gamma(x)}(u)) + \phi_{\gamma(x)}(x)).
\end{equation*}
This construction for $F$ is $G$-invariant since it is equivalent to using $\bar{F}(\bar{x}, \bar{u}) \triangleq \Delta\bar{F}(\bar{x}, \bar{u}) + \rho|_\mathcal{C}^{-1}(\bar{x})$, which satisfies the condition in Theorem~\ref{thm:state-reduction}.
Finally, $\Delta F(x, u)$ can be defined as $\Delta F(x , u) \triangleq F(x, u) - x$ to provide the same $x^\prime - x$ interface expected by many libraries.
This construction for $\Delta F$ in terms of $\Delta \bar{F}$ expands as follows.
\begin{equation} \label{eq:delta-F}
    \Delta F(x, u) \triangleq \phi_{\gamma(x)}^{-1}(\Delta \bar{F}(\rho(x), \psi_{\gamma(x)}(u)) + \phi_{\gamma(x)}(x)) - x
\end{equation}

One special case where this simplifies is when $\phi_g$ is a homomorphism with respect to $+$ for all $g \in G$. 
Then, 
\eqref{eq:delta-F} simplifies to
\begin{equation*}
    \Delta F(x, u) = \phi_{\gamma(x)}^{-1} (\Delta \bar{F}(\rho(x), \psi_{\gamma(x)}(u))).
\end{equation*}
An example where this occurs is when $\phi_g$ is linear.

\section{Experiments} \label{sec:experiments}

In this section, we demonstrate the performance of this method on two numerical examples involving learning the dynamics of a system. The two environments that we used were ``Parking" from ``Highway-env" \cite{highway-env} and ``Reacher" from OpenAI Gym \cite{brockman2016openai}.
These exhibit rotational and/or translational symmetry, as depicted in Figure \ref{fig:environments}. In ``Parking" there are two controlled vehicles that we need to park to corresponding parking spots, and in ``Reacher" we control a robot arm with 2 joints to reach to a target position.  To be able to compare our method fairly to a baseline we generated a static dataset of transitions ${(x, u, x^\prime)}$, and we learn a dynamical model from this dataset. Consequently, the comparison between the two methodologies was centered on their respective capabilities in dynamics learning rather than policy generation. This approach ensures a fair comparison, as it relies on the same offline dataset for both methods, avoiding discrepancies that may arise from online training and dataset variations.

The dataset is constructed by training an agent using Soft Actor-Critic \cite{haarnoja_soft_2018}, a model-free reinforcement learning method. 
We use D3rlpy \cite{d3rlpy} to train dynamical models from the pre-collected dataset.
We implement the symmetry method discussed above using \eqref{eq:recover-F}, where $\bar{F}$ is the neural network whose parameters are trained.
The observation error (on the test dataset) of this dynamical model is compared against the observation error of learning $F$ directly.
We further compare these methods (with and without symmetry) across a variety of parameter sizes.
The rollouts from the resulting policy, the datasets used in this work and the code to reproduce the results 
are available in a
GitHub repository\footnote{\url{https://github.com/YasinSonmez/symmetry-cs285}}.

\subsection{Two Cars Parking Scenario} \label{subsec:parking}
We use the parking environment from \cite{highway-env} with two cars with separate goal positions. The aim 
is to park the two cars at their goal 
spots by controlling them separately and not 
allowing them to crash
into obstacles or to each other. 
The dynamics of each car are translation- and rotation-invariant, but the reward function (relating to the location and orientation of the parking spot) is not.
Therefore, 
methods that assume 
both the dynamics and the reward function satisfy the same symmetries
are not applicable.

\begin{figure}[tp]
    \centering
    \includegraphics[width = \linewidth, clip, viewport=50 100 1250 500, decodearray={0.15 1 0.15 1 0.15 1}]{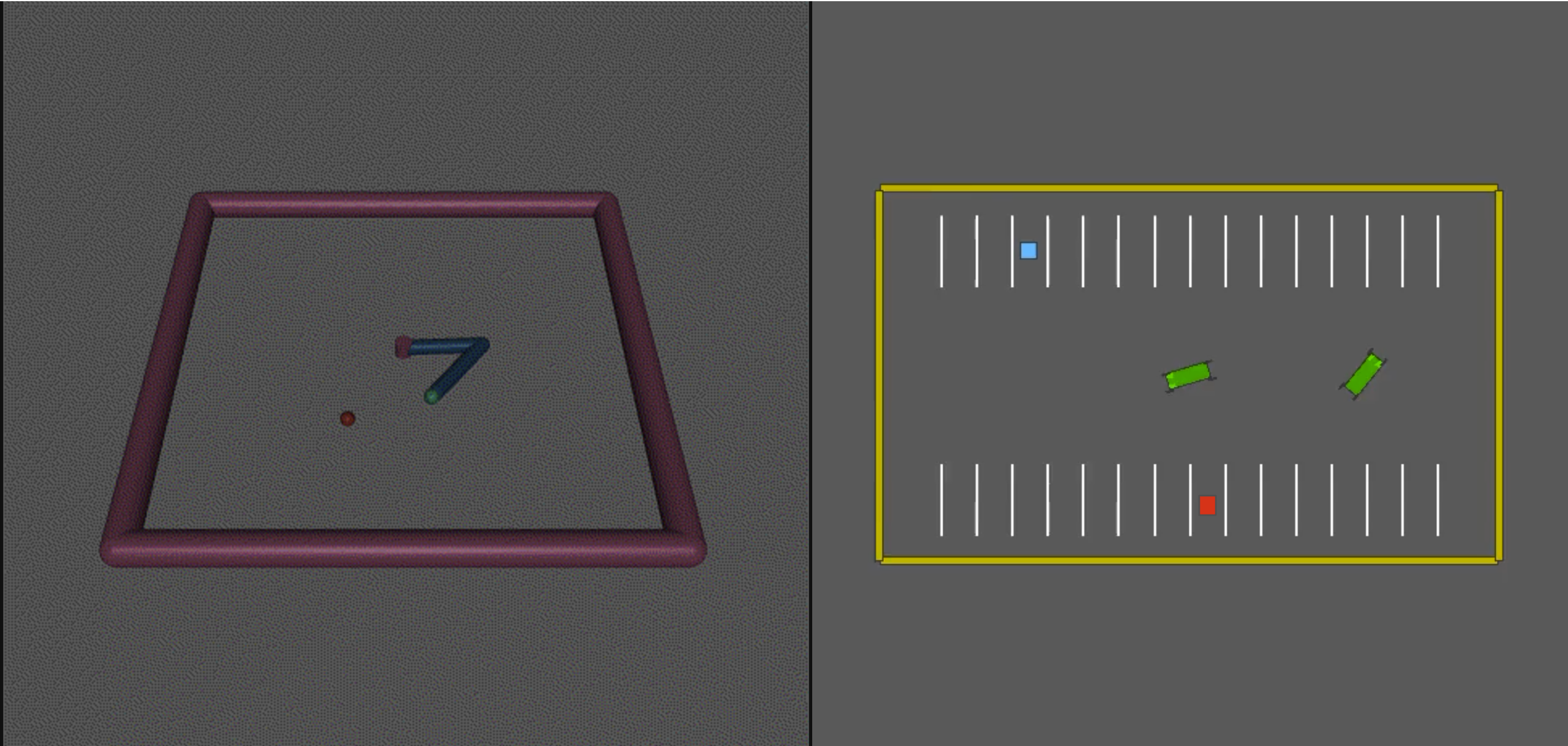}
    \caption{Experimental environments included: (1) ``Reacher'' on the left, featuring two rotating controlled joints that exhibit rotation symmetry with the objective of reaching a target point, and (2) ``Parking" on the right, involving two controlled vehicles maneuvering to park in designated spots without collision, demonstrating both rotational and translational symmetry in dynamics.}
    \label{fig:environments}
\end{figure}


The state of this parking environment is 24-dimensional, with the first 6 states corresponding to the first car, the second 6 states corresponding to the second car, the third 6 states corresponding to the first car's goal, and the final 6 states corresponding to the second car's goal. The state 
incorporates
the two car models and two goal models.
The goals follow the 
model  $g_{k+1} = g_k$.
\new{We include the goal states in the state representation because it aligns with the conventional RL framework where all states are utilized directly in the training process without explicit distinction.}
We apply the method presented in Sections \ref{sec:symmetries} and \ref{sec:learning} to construct transformation groups for each of the four systems.
The joint system then satisfies the transformation group formed as the product of the individual transformation groups.

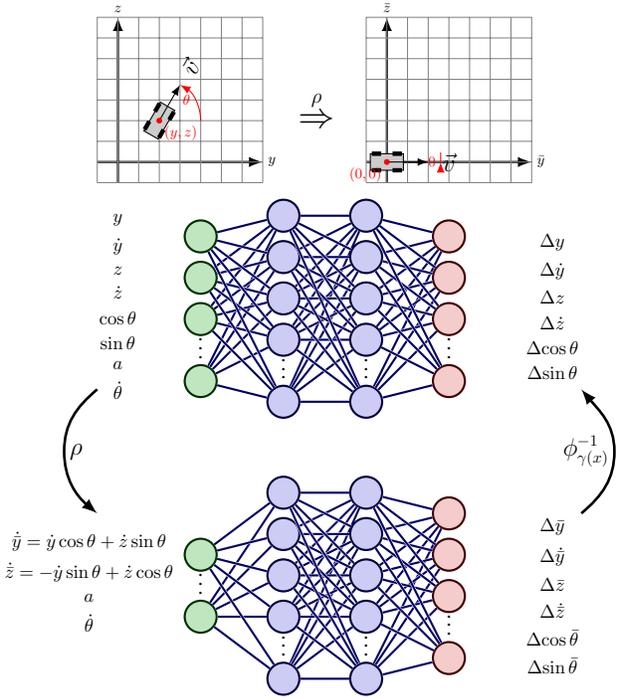
\begin{figure}[tp]
    \centering
    \begin{tikzpicture}[scale=0.55, transform shape]
    \tikzset{
        car/.pic={
            \filldraw [fill=black!20!white, draw=black] (0.2, 0.4) rectangle (-0.2,-0.4);
            \filldraw [fill=black] (-0.16,-0.15) rectangle (-0.24,-0.35);
            \filldraw [fill=black] (0.16,-0.15) rectangle (0.24,-0.35);
            \filldraw [fill=black] (-0.16,0.15) rectangle (-0.24,0.35);
            \filldraw [fill=black] (0.16,0.15) rectangle (0.24,0.35);
            \draw[->] (0,0) -- ++(90:1) node[pos=1.5,sloped=true] {\LARGE $\vec{v}$};
        }
    }
    {
        \draw[thick,->] (-0.5,0) -- (3.5,0) node[right] {$y$};
        \draw[thick,->] (0,-0.5) -- (0,3.5) node[above] {$z$};
        
        \draw[step=0.5, gray, very thin] (-0.5,-0.5) grid (3.5,3.5);
        
        \def\y{1}
        \def\z{1}
        
        \def\mytheta{60} 

        \pic[rotate=-90+\mytheta] at (\y,\z) {car}; 
        
        \fill[red] (\y,\z) circle (2pt) node[below right] {$(y,z)$};
        
        \draw[->, red] (\y,\z) +(right:1cm) arc[start angle=0, end angle=\mytheta, radius=1cm] node[midway, left] {$\theta$};
    }

    \node (arrow) at (4.8,1) {\Huge $\Rightarrow$}; 
    \node at (4.8,1.5) {\Large $\rho$}; 
    \begin{scope}[xshift=6.5cm] 
        \draw[thick,->] (-0.5,0) -- (3.5,0) node[right] {$\bar{y}$};
        \draw[thick,->] (0,-0.5) -- (0,3.5) node[above] {$\bar{z}$};
        
        \draw[step=0.5, gray, very thin] (-0.5,-0.5) grid (3.5,3.5);
        
        \def\y{0}
        \def\z{0}
        \def\theta{0}
        \pic[rotate=-90+\theta] at (\y,\z) {car}; 
        
        \fill[red] (\y,\z) circle (2pt) node[below left] {$(0,0)$};
        \draw[->, red] (\y,\z) +(right:1.3cm) arc[start angle=0, end angle=\theta, radius=1cm] node[midway, left] {$0$};
        
    \end{scope}

    \begin{scope}[transform canvas={scale=0.7}]
        \matrix (m) [matrix of math nodes, nodes={minimum width=0em, minimum height=0em}] at (0,-5)
        {
            y  \\
            \dot{y}  \\
            z  \\
            \dot{z}  \\
            \cos{\theta}  \\
            \sin{\theta}  \\
            a  \\
            \dot{\theta}  \\
        };
        \matrix (m) [matrix of math nodes, nodes={minimum width=0em, minimum height=0em}] at (15,-5)
        {
            \new{\Delta{y}}  \\
            \new{\Delta{\dot{y}}} \\
            \new{\Delta{z}}  \\
            \new{\Delta{\dot{z}}}\\
            \new{\Delta{\cos{\theta}}} \\
            \new{\Delta{\sin{\theta}}}\\
        };
    \end{scope}
      \draw[->, line width=1pt] (-0.5,-5.5) to[out=220,in=90] node[right,pos=1] {\LARGE $\rho$} (-1.3,-7) to[out=270,in=140] (-0.5,-8.5);
      
      \draw[->, line width=1pt] (11.2,-8.5) to[out=40,in=270] node[left,pos=1] {\LARGE $\phi_{\gamma(x)}^{-1}$} (12,-7) to[out=90,in=310] (11.2,-5.5);
    
    \begin{scope}[yshift=-2.8cm] 
      \message{^^JNeural network, shifted}
      \readlist\Nnod{4,5,5,4} 
      \readlist\Nstr{n,m,m,m,k} 
      \readlist\Cstr{\strut x,a^{(\prev)},a^{(\prev)},a^{(\prev)},y} 
      \def\yshift{0.5} 
      
      \message{^^J  Layer}
      \foreachitem \N \in \Nnod{ 
        \def\lay{\Ncnt} 
        \pgfmathsetmacro\prev{int(\Ncnt-1)} 
        \message{\lay,}
        \foreach \i [evaluate={\c=int(\i==\N); \y=\N/2-\i-\c*\yshift;
                     \index=(\i<\N?int(\i):"\Nstr[\lay]");
                     \x=2*\lay; \n=\nstyle;}] in {1,...,\N}{ 
          \node[node \n] (N\lay-\i) at (\x,\y) {};
          
          \ifnum\lay>1 
            \foreach \j in {1,...,\Nnod[\prev]}{ 
              \draw[connect,white,line width=1.2] (N\prev-\j) -- (N\lay-\i);
              \draw[connect] (N\prev-\j) -- (N\lay-\i);
            }
          \fi 
          
        }
        \path (N\lay-\N) --++ (0,1+\yshift) node[midway,scale=1.5] {$\vdots$};
      }
    \end{scope}

    \begin{scope}[transform canvas={scale=0.7}]
    \matrix (m) [matrix of math nodes, nodes={minimum width=0em, minimum height=0em}] at (-1,-14.5)
    {
        \dot{\bar{y}} = \dot{y} \cos{\theta} + \dot{z} \sin{\theta}\\
        \dot{\bar{z}} = -\dot{y} \sin{\theta} + \dot{z} \cos{\theta}\\
        a\\
        \dot{\theta}\\
    };
    \matrix (m) [matrix of math nodes, nodes={minimum width=0em, minimum height=0em}] at (15,-15)
    {
        \new{\Delta{\bar{y}}}  \\
        \new{\Delta{\dot{\bar{y}}}} \\
        \new{\Delta{\bar{z}}}  \\
        \new{\Delta{\dot{\bar{z}}}} \\
        \new{\Delta{\cos{\bar{\theta}}}} \\
        \new{\Delta{\sin{\bar{\theta}}}}\\
    };
    \end{scope}

    \begin{scope}[yshift=-9.5cm] 
      \message{^^JNeural network, shifted}
      \readlist\Nnod{2,5,5,4} 
      \readlist\Nstr{n,m,m,m,k} 
      \readlist\Cstr{\strut x,a^{(\prev)},a^{(\prev)},a^{(\prev)},y} 
      \def\yshift{0.5} 
      
      \message{^^J  Layer}
      \foreachitem \N \in \Nnod{ 
        \def\lay{\Ncnt} 
        \pgfmathsetmacro\prev{int(\Ncnt-1)} 
        \message{\lay,}
        \foreach \i [evaluate={\c=int(\i==\N); \y=\N/2-\i-\c*\yshift;
                     \index=(\i<\N?int(\i):"\Nstr[\lay]");
                     \x=2*\lay; \n=\nstyle;}] in {1,...,\N}{ 
          \node[node \n] (N\lay-\i) at (\x,\y) {};
          
          \ifnum\lay>1 
            \foreach \j in {1,...,\Nnod[\prev]}{ 
              \draw[connect,white,line width=1.2] (N\prev-\j) -- (N\lay-\i);
              \draw[connect] (N\prev-\j) -- (N\lay-\i);
            }
          \fi 
          
        }
        \path (N\lay-\N) --++ (0,1+\yshift) node[midway,scale=1.5] {$\vdots$};
      }
    \end{scope}

\end{tikzpicture}
    \caption{Illustration of 
    translational and rotational invariance in car dynamics. By applying Cartan's moving frame method, coordinates are transformed to position the car at the origin with a neutral orientation. The function $\rho$ reduces the system's state to a lower-dimensional space, without losing essential dynamics information. 
    The dynamics are learned in these modified coordinates via a smaller neural network (bottom) compared to the usual NN (middle). The NN's output is then reconverted to original coordinates using $\phi_{\gamma(x)}^{-1}$. \new{As usual, the neural network training is simplified by outputting the difference between the next and current states, represented as $\Delta F(x, u)$ and $\Delta \bar{F}(x, u)$ in Section \ref{sec:learning}. This is an illustration of a single car. In the experiments, additional symmetries are observed due to the presence of two cars; however, the second car has been omitted here for clarity.}
}
    \label{fig:symmetry-explanation}
\end{figure}

Each car has state $ x = (y, z, v_y, v_z, h_y, h_z)$, corresponding to the $y$ position, $z$ position, $y$ component of velocity, $z$ component of velocity, cosine of heading angle, and sine of heading angle.
\new{
We use the group $G = \mathrm{SE}(2)$ to represent the symmetries. See Example~\ref{example:car} for the action of $G$ on the state space and the solutions for $\gamma$ and $\rho$.
}
The action of $G$ on the control input space is the identity: $\psi_g(u) = u$ for all $g \in G$.
The goal dynamical model satisfies the trivial transformation group $(\mathbb{R}^6, +)$ which acts on the goal by $\phi_\delta(g) = g + \delta$. There is no input to act on.
The $a$ component of $\phi$ has 6 components, leaving the $b$ component with 0.


\begin{figure*}[tp]
    \centering
    \includegraphics[width=0.33 \textwidth]{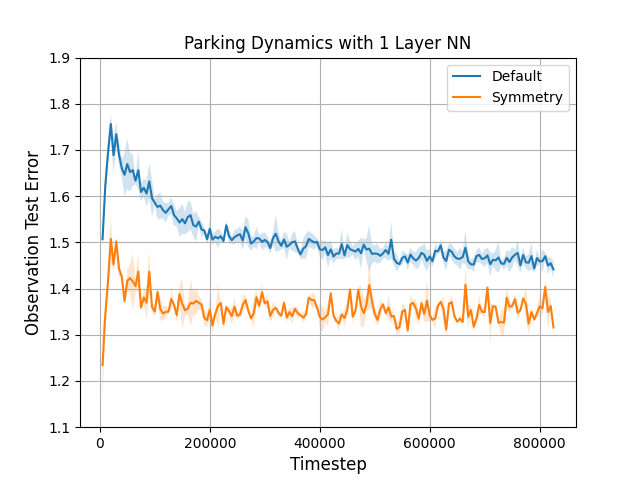}
    \includegraphics[width=0.33 \textwidth]{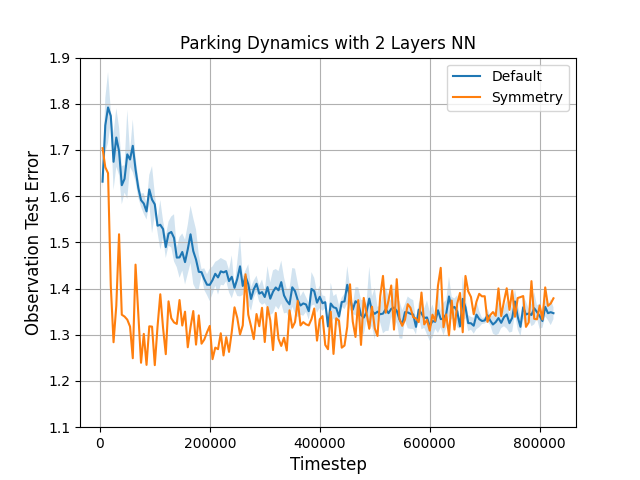}
    \includegraphics[width=0.32 \textwidth]{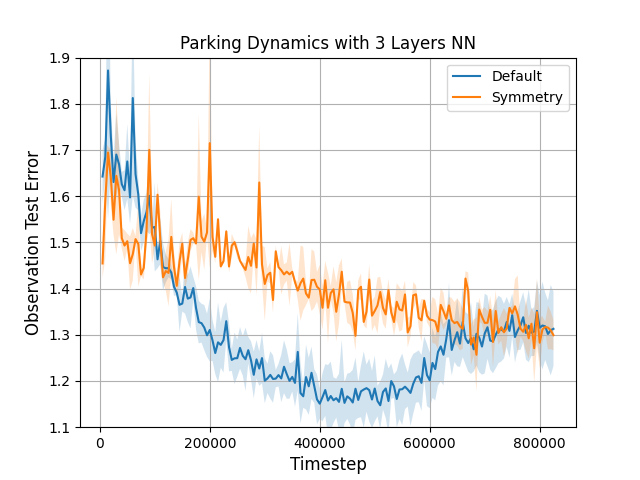}
    \caption{Comparison of learning the dynamics with and without using symmetry in parking scenario with different NN architectures. The y-axis (observation error) is the error on the test dataset. Mean and standard deviation reported over 4 runs.}
    \label{fig:parking-results}
\end{figure*}


\begin{figure*}[tp]
    \centering
    \includegraphics[width=0.33 \textwidth]{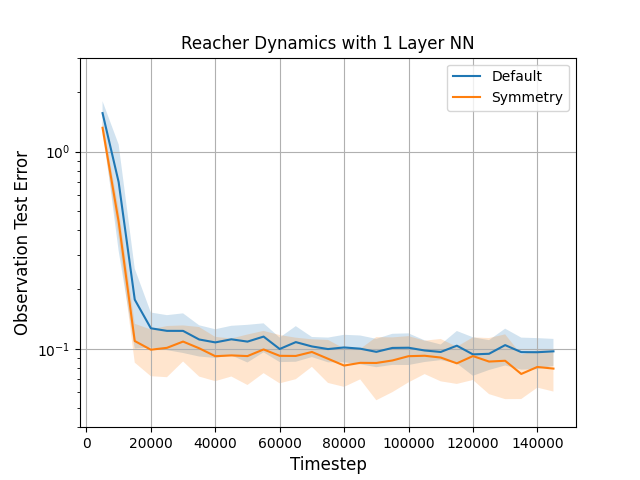}
    \includegraphics[width=0.33 \textwidth]{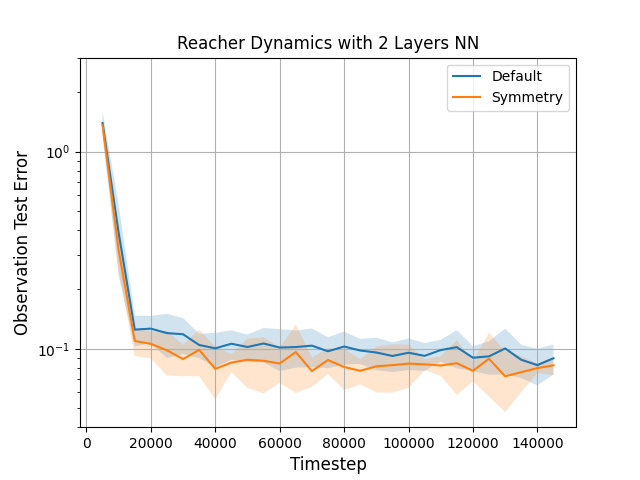}
    \includegraphics[width=0.32 \textwidth]{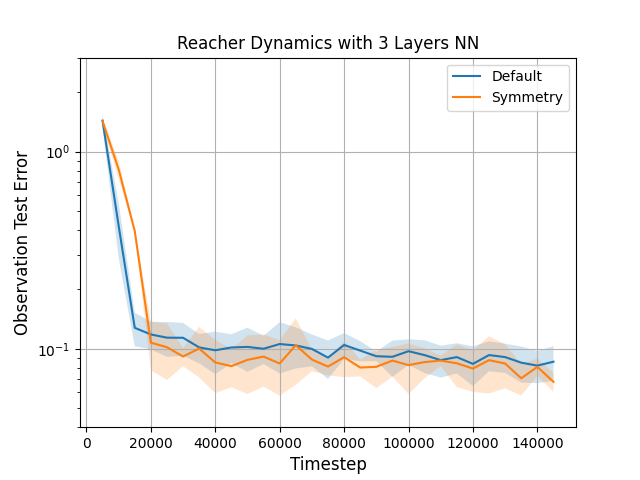}
    \caption{Comparison of learning the dynamics with and without using symmetry in reacher environment with different NN architectures. The y-axis (observation error) is the error on the test dataset. Mean and standard deviation reported over 4 runs.}
    \label{fig:reacher-results}
\end{figure*}

Using the above transformation groups applied to each car and goal, the transformation group applied to the joint state of the environment gives a $\rho$ which \new{transforms the input state from a 24-dimensional vector to a 4-dimensional one}.
Thus, we learn a neural network with significantly reduced input size. Note that the computation of these functions only depends on the symmetry and not on the dynamics.
The
dynamical model need not be known apriori and can be learned using the data;
see Figure~\ref{fig:symmetry-explanation}.

The three neural network configurations 
used in the experiments are: 1 hidden layer with size 128, 2 hidden layers with size 128, and 3 hidden layers with size 128.
When using symmetry, the input size is 8 (reduced coordinate size for each car of 2 and control input size of 2 for each 
car), and the output size is 24.
In the method not using symmetry, the input size is 28 (state space size of 24 and control input size of 2 for each of the two cars), and the output size is 24.

The results of observation error vs. number of parameter updates are shown in Figure~\ref{fig:parking-results}.
At the lower number of parameters, the dynamical model training with symmetry outperforms the default method without symmetry by achieving a lower observation error.
\new{When there are two hidden layers, the method with symmetry learns faster than the default method without symmetry, although eventually the method without symmetry catches up in performance.}
When there are three hidden layers, the method without symmetry exceeds the performance of the method with symmetry.
\new{
This suggests that the relative benefit of leveraging symmetry compared to the benefit from the number of parameters diminishes as the number of neural network parameters increases.
One possible cause for this is that our method of leveraging symmetry removes redundant information.
}
\newtwo{
This redundant information is a transformed version of the other available information, and might be more immediately useful to the network. As a result, the neural network does not have to learn these transformations on its own, making the redundant information beneficial when a sufficiently high number of parameters is available.
}


\subsection{Reacher}
In this experiment, we consider the standard reacher environment in OpenAI Gym.
The dynamics 
exhibit rotational symmetry with the first joint angle, \new{but the dynamics of the target do not since the target is fixed.}
We use symmetry reduction to learn model dynamics which are rotation-invariant.

The state of the reacher is 11-dimensional, with $x_1$ and $x_2$ representing the cosines of the first and second joint angles, $x_3$ and $x_4$ representing the sines of the joint angles, $x_5$ and $x_6$ representing the x- and y-coordinates of the target, $x_7$ and $x_8$ representing the angular velocities of the first and second arm, $x_9$ and $x_{10}$ representing the distance between the reacher fingertip and the target along the x- and y-axes, and $x_{11}$ equaling 
0, representing the $z$ coordinate of the fingertip.

We now describe the transformation group applied to learn a rotation-invariant dynamical model. 
For $x_5, x_6$, and $x_{11}$, we apply the same symmetry used in Section~\ref{subsec:parking} to describe constants.
Let $G$ be parameterized by $g = (\theta^\prime, \delta_1, \delta_2, \delta_3) \in \mathbb{R}^4$, where $\theta^\prime$ represents the angle by which the reacher is rotated, and $\delta$ represents translations of the target position and $x_{11}$.
We define the action of $G$ on the state space as:
\begin{equation}
    \phi_g(x) =
    \begin{bmatrix}
        \cos(\theta^\prime) x_1 - \sin(\theta^\prime) x_3 \\
        x_2 \\
        \sin(\theta^\prime) x_1 + \cos(\theta^\prime) x_3 \\
        x_4 \\
        \cos(\theta^\prime)(x_5 + \delta_1) - \sin(\theta^\prime)(x_6 + \delta_2) \\
        \sin(\theta^\prime)(x_5 + \delta_1) + \cos(\theta^\prime)(x_6 + \delta_2) \\
        x_7 \\
        x_8 \\
        \cos(\theta^\prime)(x_9 - \delta_1) - \sin(\theta^\prime)(x_{10} - \delta_2) \\
        \sin(\theta^\prime)(x_9 - \delta_1) + \cos(\theta^\prime)(x_{10} - \delta_2) \\
        x_{11} + \delta_3
    \end{bmatrix}.
\end{equation}

The action of $G$ on the control input is the identity: $\psi_g(u) = u$ for all $g \in G$.
We define $\mathcal{C}$ by the states $x$ for which $x_1 = 1$, $x_3 = 0$, $x_5 = 0$, $x_6 = 0$, and $x_{11} = 0$.
The solution for the moving frame $\gamma(x)$ is 
\begin{equation*}
    \gamma(x) = \begin{bmatrix}
        \mathrm{arctan2}(-x_3, x_1) \\
        -x_5 \\
        -x_6 \\
        -x_{11}
    \end{bmatrix}
\end{equation*}
and the solution for the group inverse of $\gamma(x)$ is 
\begin{equation*}
    \gamma(x)^{-1} = \begin{bmatrix}
        \mathrm{arctan2}(x_3, x_1) \\
        x_1 x_5 + x_3 x_6 \\
        -x_3 x_5 + x_1 x_6 \\
        x_{11}
    \end{bmatrix}.
\end{equation*}
We now plug $\gamma$ into \eqref{eq:rho} to solve for $\rho$:
\begin{equation}
    \rho(x) = \begin{bmatrix}
        x_2 \\ x_4 \\ x_7 \\ x_8 \\ 
        x_1 (x_9 + x_5) + x_3 (x_{10} + x_6) \\
        -x_3 (x_9 + x_5) + x_1 (x_{10} + x_6)
    \end{bmatrix}.
\end{equation}

Using this symmetry group, the input size of the neural network is 8 (reduced coordinate size 6 and control input size of 2), and the output size is 11.
When not using symmetry, the input size is 13 (state space size of 11 and control input size of 2), and the output size is 11.
The three neural network configurations that we used in our experiments are: 1 hidden layer with size 64, 2 hidden layers with size 64, and 3 hidden layers with size 64. We used 64 neuron size compared to 128 in ``Parking" because of the simplicity of this problem.

The results of observation error vs. number of parameter updates are shown in Figure~\ref{fig:reacher-results}.
In all cases, the dynamical model training with symmetry achieves slightly better performance than the default method without symmetry by achieving a lower observation error at the end of the training.
This is most evident when the neural network has the least number of parameters, in the 1-layer case. 
\section{Conclusion}

This study highlights the benefits of 
exploiting
symmetry 
when
learning dynamical models.
By focusing on the symmetry inherent in dynamics, independent of the reward function, we broaden the range of problems where symmetry can be effectively utilized. We investigated symmetries such as rotation and translation symmetries through Cartan's method of moving frames and our proposed method proves to be effective in leveraging these symmetries to learn more accurate models, particularly at low numbers of model parameters. 
The experiments 
confirm the potential of our approach to enhance learning efficiency.
\new{Future work involves training policies using the proposed symmetry-enhanced method and evaluating their performance. Additionally, a theoretical framework that includes discrete symmetries can be developed.
}

\section{Acknowledgements}
We thank Sergey Levine for pointing to relevant references and for discussions on 
preliminary results,
and Hussein Sibai for discussions on symmetry reduction prior to this project. 

\renewcommand*{\bibfont}{\footnotesize}
\printbibliography
\end{document}